\documentclass[conference]{IEEEtran}
\IEEEoverridecommandlockouts

\usepackage{amsmath,amssymb,amsfonts,mathtools,bm}
\usepackage{amsthm}
\usepackage{relsize}      
\usepackage{textcomp}

\usepackage{algorithm}
\usepackage{algpseudocode}

\usepackage{graphicx}
\usepackage{svg}
\usepackage{tikz}
\usepackage{float}
\usepackage{placeins}     

\usepackage[table]{xcolor} 
\usepackage{booktabs}     
\usepackage{makecell}     
\usepackage{multirow}      
\usepackage{array}         
\usepackage{colortbl}     
\usepackage{adjustbox}     
\usepackage{tabularx}      
\usepackage{diagbox}      

\usepackage{caption}
\usepackage{subcaption}
\usepackage{sidecap}
\usepackage{setspace}
\usepackage{microtype}     
\usepackage{balance}      
\usepackage{dblfloatfix}  

\usepackage{cite}
\usepackage{url}
\usepackage{bbding}        

\newcommand{\orcidauthorA}{0009-0000-1171-252X} 
\newcommand{\orcidauthorB}{0000-0003-1439-4875} 
\newcommand{\orcidauthorC}{0000-0001-7907-2071} 

\newtheorem{theorem}{Theorem}
\newtheorem{problem}{Problem}

\def\BibTeX{{\rm B\kern-.05em{\sc i\kern-.025em b}\kern-.08em
    T\kern-.1667em\lower.7ex\hbox{E}\kern-.125emX}}

\definecolor{best_red}{HTML}{9A0000}
\definecolor{second_blue}{HTML}{00009B}
\definecolor{gain_up}{HTML}{f08a5d}     
\definecolor{gain_down}{HTML}{00adb5}   

\newcommand{\best}[1]{\textbf{\textcolor{best_red}{#1}}}

\newcommand{\second}[1]{\textcolor{second_blue}{\underline{\mbox{#1}}}}

\newcommand{\timesw}{\scalebox{0.85}{\fontfamily{cmm}\selectfont \textit{\textbf{x}}}}

\newcommand{\up}[1]{\ensuremath{\hspace{1pt}\mathsmaller{\textcolor{gain_up}{\uparrow \text{#1\%}}}}}
\newcommand{\dn}[1]{\ensuremath{\hspace{1pt}\mathsmaller{\textcolor{gain_down}{\downarrow \text{#1\%}}}}}

\allowdisplaybreaks
\renewcommand{\baselinestretch}{1}
\begin{document}

\title{RMPrior: Bridging Propagation Priors and Diffusion Refinement for Efficient Radio Map Construction}

\author{\IEEEauthorblockN{
Zixuan Guo\IEEEauthorrefmark{1},
Xiucheng Wang\IEEEauthorrefmark{2},
Nan Cheng\IEEEauthorrefmark{2}
}
\IEEEauthorblockA{
\IEEEauthorrefmark{1}School of Physics, Xidian University, Xi'an, 710071, China\\
\IEEEauthorrefmark{2}State Key Laboratory of ISN and School of Telecommunications Engineering, Xidian University, Xi'an, 710071, China\\
Email: 24179100067@stu.xidian.edu.cn, xcwang\_1@stu.xidian.edu.cn, dr.nan.cheng@ieee.org
}
}

\maketitle

\begin{abstract}
Diffusion models achieve high-fidelity radio map construction through iterative denoising, yet their sampling cost limits practicality in dynamic wireless systems where radio maps must be refreshed repeatedly. Meanwhile, classical propagation models encode valuable scene-level knowledge that standard diffusion inference discards entirely by initializing from pure Gaussian noise. This paper bridges propagation priors and diffusion refinement through a mid-start sampling strategy. A matched propagation prior is perturbed to an intermediate diffusion timestep, and the pretrained diffusion backbone executes only the remaining reverse steps, focusing computation on multipath-aware refinement rather than full reconstruction from noise. We provide theoretical analysis establishing an upper bound on the initialization gap, a sufficient condition under which truncation improves reconstruction fidelity, and a formal characterization of prior-quality sensitivity under aggressive truncation. Experiments on IRT4HighRes show that, at $P_{\text{start}}=0.5$, the proposed method achieves a $2.01\times$ speedup while simultaneously improving NMSE, RMSE, SSIM, and PSNR over the full-step baseline. A prior-quality ablation across three propagation models of different fidelity confirms that reconstruction quality tracks prior quality, with the sensitivity amplified under shorter reverse trajectories, consistent with the theoretical predictions. These results also suggest that mid-start reconstruction quality can serve as a proxy for ranking the scene-level fidelity of different propagation models.
\end{abstract}

\begin{IEEEkeywords}
Radio map construction, diffusion model, propagation prior, inference acceleration
\end{IEEEkeywords}

\section{Introduction}
Radio maps provide a spatial representation of large-scale wireless propagation and serve as a foundation for coverage analysis, deployment planning, beam management, and environment-aware network control. Constructing accurate radio maps has therefore received sustained attention in both model-driven and data-driven communities \cite{wang2026tutorial}. Classical approaches rely on geometry-aware propagation models such as ray-based approximations and the dominant path model (DPM) \cite{deschamps1972ray,11278649,dpm}. These physics-based methods encode valuable knowledge about the propagation environment, including dominant reflection and diffraction paths, scene geometry, and large-scale pathloss structure. However, they typically simplify or omit higher-order multipath interactions, which limits their reconstruction fidelity in complex indoor and dense urban scenarios. Learning-based methods, including RadioUNet, deep Gaussian-process estimators, and GAN-based approaches, can achieve higher fidelity by training on measured or simulated radio-map data \cite{levie2021radiounet,wang2020indoor,zhang2023rme}. These models are generally fast at inference time, but they operate as direct regressors or one-shot generators and do not exploit the structured propagation knowledge that physics-based models can provide.
 
Diffusion models offer a more expressive generative framework for radio-map construction. Foundational formulations such as denoising diffusion probabilistic models (DDPM) and denoising diffusion implicit models (DDIM) demonstrate that high-fidelity synthesis can be achieved through iterative denoising \cite{ho2020denoising,song2020denoising}. In the radio-map domain, RadioDiff and RMDM have shown that diffusion-based reconstruction can outperform earlier learning-based methods in terms of fidelity metrics \cite{wang2024radiodiff,jia2025rmdm}, while efficiency-oriented variants explore lighter generative backbones or trajectory reuse across correlated samples \cite{11152929,11282987}. Despite these advances, iterative reverse sampling remains computationally expensive. In dynamic wireless systems where radio maps must be refreshed repeatedly across transmitter locations, candidate placements, or consecutive time slots, this cumulative inference cost becomes a practical bottleneck. Meanwhile, the propagation knowledge encoded in classical models remains largely unused during diffusion inference, which always starts from pure Gaussian noise regardless of whether a coarse but informative prior is already available.
 
This paper bridges the gap between physics-based propagation priors and diffusion-based refinement. The core idea is that a matched radio map generated by DPM already captures the dominant propagation layout for a given scene and transmitter configuration, and this structured knowledge can be injected into the diffusion sampling process to replace the uninformative pure-noise initialization. Specifically, we perturb the matched DPM radio map to an intermediate diffusion timestep and execute reverse denoising only over the remaining trajectory. The diffusion model is thereby used for multipath-aware refinement rather than for reconstructing the entire radio map from scratch. This mid-start strategy is inference-only and does not require retraining the base diffusion model. We instantiate it on the pretrained RMDM \cite{jia2025rmdm} backbone and evaluate on IRT4HighRes, using the full test domain for the main comparison, a sparse subset for operating-range sensitivity analysis, and a 720-sample overlap subset for prior-quality ablation. The prior-quality ablation further compares three propagation models of different fidelity as initialization sources, showing that reconstruction quality tracks prior quality and suggesting that mid-start diffusion can also serve as a proxy for evaluating the scene-level fidelity of different propagation models. The main contributions are as follows.
\begin{enumerate}
    \item We propose an inference-only mid-start diffusion sampling strategy that bridges propagation priors and diffusion refinement for radio map construction. A matched DPM prior is perturbed to an intermediate diffusion timestep and used to initialize the reverse process on the pretrained RMDM backbone, so that the diffusion model focuses on residual multipath refinement rather than full reconstruction from noise.
    \item We evaluate the proposed method on a high-resolution ray-tracing radio map dataset with 7920 test samples. At $P_{\text{start}}=0.5$, the method reduces average inference latency by $2.01\times$ while simultaneously improving NMSE, RMSE, SSIM, and PSNR over the full-step baseline, indicating that the physics-based initialization not only saves computation but also reduces accumulated sampling error.
    \item We conduct a prior-quality ablation across three propagation models of different fidelity on a 720-sample overlap subset. The results show that reconstruction quality degrades consistently as the prior becomes less informative, and this sensitivity increases under more aggressive truncation. This finding highlights the central role of prior quality and suggests that mid-start diffusion reconstruction can serve as an indirect means to assess the scene-level accuracy of different propagation models.
\end{enumerate}

\section{Preliminary and Problem Formulation}

\subsection{Conditional DDPM}

We adopt the conditional DDPM formulation \cite{ho2020denoising}. Let $x_0 \in \mathbb{R}^{H \times W}$ denote the target radio map in image domain, where each pixel encodes the pathloss value at the corresponding spatial location, and let $c$ denote the conditioning information used by the pretrained diffusion backbone. The forward process gradually perturbs a clean radio map into Gaussian noise according to a variance schedule $\{\beta_t\}_{t=1}^{T}$, where $\alpha_t = 1-\beta_t$ and $\bar{\alpha}_t = \prod_{i=1}^{t}\alpha_i$. The forward marginal at timestep $t$ is
\begin{equation}
q(x_t \mid x_0)
=
\mathcal{N}\!\left(
x_t;
\sqrt{\bar{\alpha}_t}\,x_0,
\left(1-\bar{\alpha}_t\right)I
\right).
\label{eq:forward_marginal}
\end{equation}
The denoising network $\epsilon_\theta(x_t,t,c)$ predicts the injected noise under condition $c$ and is trained with the conditional noise-prediction objective
\begin{equation}
\mathcal{L}_{\mathrm{diff}}
=
\mathbb{E}_{x_0,c,t,\epsilon}
\left[
\left\|
\epsilon-\epsilon_\theta(x_t,t,c)
\right\|_2^2
\right],
\label{eq:diff_loss}
\end{equation}
\begin{equation}
x_t
=
\sqrt{\bar{\alpha}_t}\,x_0
+
\sqrt{1-\bar{\alpha}_t}\,\epsilon,
\qquad
\epsilon\sim\mathcal{N}(0,I).
\label{eq:xt_construct}
\end{equation}
Under standard DDPM inference, sampling starts from $x_T \sim \mathcal{N}(0,I)$ and applies the full $T$-step reverse chain to recover an estimate of $x_0$. At each reverse step, the model predicts the noise component and uses it to obtain the posterior mean for the previous timestep. The per-step transition is
\begin{equation}
x_{t-1}
=
\frac{1}{\sqrt{\alpha_t}}
\left(
x_t - \frac{\beta_t}{\sqrt{1-\bar{\alpha}_t}}\,\epsilon_\theta(x_t,t,c)
\right)
+ \sigma_t z,
\label{eq:reverse_step}
\end{equation}
where $z \sim \mathcal{N}(0,I)$ and $\sigma_t$ is the posterior variance. This formulation shows that each reverse step incurs one full forward pass through the denoising network, making the total inference cost proportional to $T$.

\subsection{Problem Statement}
 
Let $x_{\mathrm{p}} \in \mathbb{R}^{H \times W}$ denote a matched propagation prior generated by a physics-based model for the same scene layout, transmitter location, and output grid as the target sample. Such a prior encodes the dominant propagation structure, including large-scale pathloss distribution and primary blocked regions, but omits higher-order multipath interactions. Depending on the available scene information and computational budget, this prior can be produced by geometry-based, empirical, or simplified analytical propagation models, each providing a different level of scene-level fidelity.
 
Standard DDPM inference discards all available physics knowledge by initializing from pure Gaussian noise $x_T \sim \mathcal{N}(0,I)$. This is wasteful when a matched propagation prior is already available, because the early reverse steps must reconstruct large-scale propagation structure that the prior already provides. The goal of this paper is to design an inference-only sampling strategy that exploits the structural information in $x_{\mathrm{p}}$ to reduce the number of reverse denoising steps while preserving reconstruction fidelity. Let $\mathcal{I}: \mathbb{R}^{H \times W} \rightarrow \mathbb{R}^{H \times W}$ denote an initialization mapping that converts the matched prior into a diffusion-compatible starting state, let $\mathcal{R}_\theta(\cdot, t_s, c)$ denote the reverse denoising chain from timestep $t_s$ to $0$ using the pretrained backbone, and let $\mathcal{L}_{\mathrm{recon}}^{\mathrm{full}}$ denote the expected reconstruction loss of the full-chain baseline starting from $x_T \sim \mathcal{N}(0,I)$. The problem is formulated as follows. \begin{problem}
\label{prob:midstart}
\begin{align}
\min_{t_s,\, \mathcal{I}} \quad & t_s \label{eq:prob_obj} \\
\mathrm{s.t.} \quad & \hat{x}_0 = \mathcal{R}_\theta\!\left(\mathcal{I}(x_{\mathrm{p}}),\, t_s,\, c\right), \tag{\ref{eq:prob_obj}a}\label{eq:prob_recon} \\
& \mathbb{E}\!\left[\left\|\hat{x}_0 - x_0\right\|_2^2\right] \leq \mathcal{L}_{\mathrm{recon}}^{\mathrm{full}}, \tag{\ref{eq:prob_obj}b}\label{eq:prob_fidelity} \\
& t_s \in \{1, \dots, T\}.\tag{\ref{eq:prob_obj}c} \label{eq:prob_range}
\end{align}
\end{problem}

\section{Proposed Method}

\begin{figure*}[htbp]
    \centering
    \includegraphics[width=0.75\linewidth]{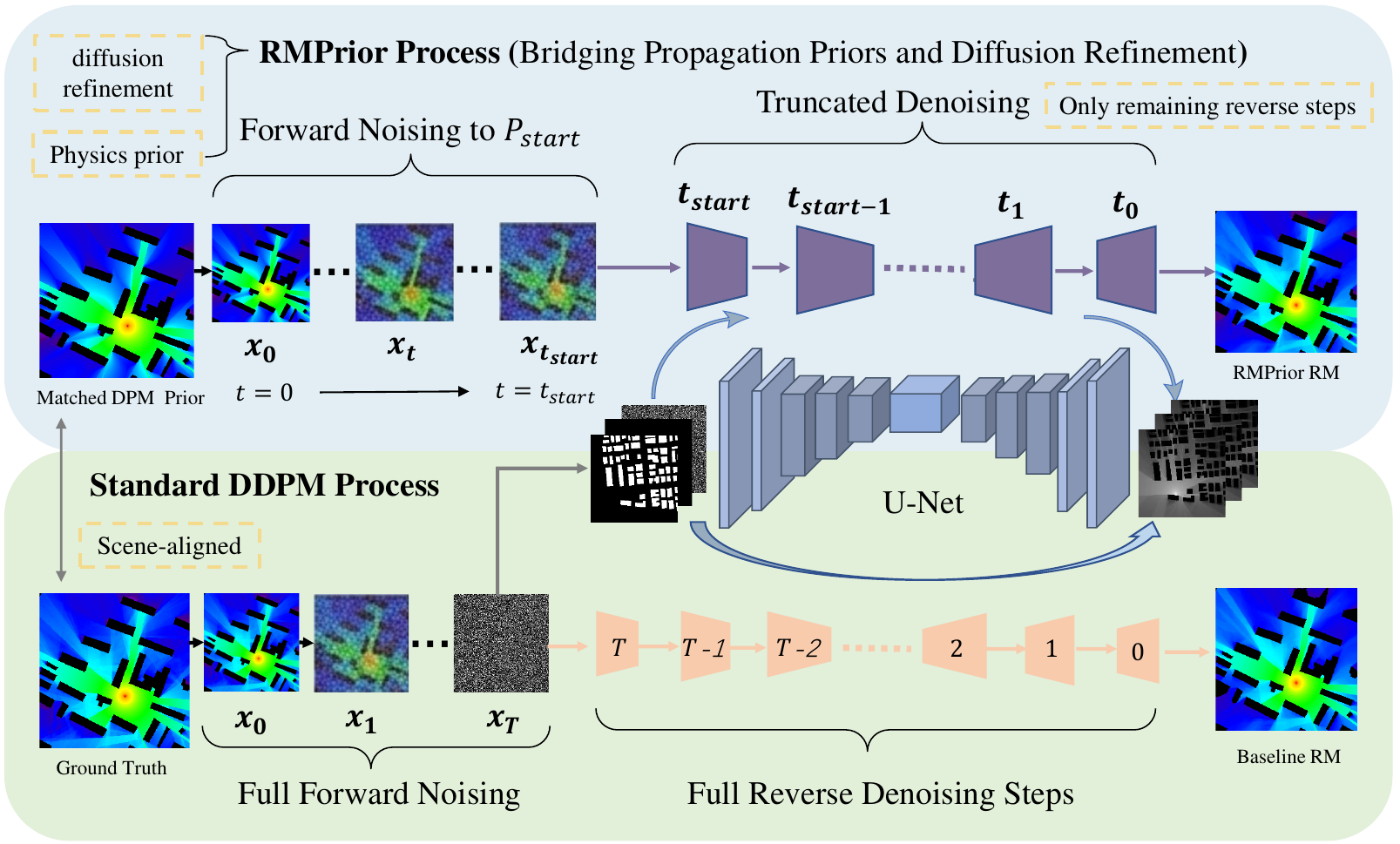}
    \caption{Overview of the proposed RMPrior pipeline. A matched propagation prior is perturbed to an intermediate diffusion state specified by $P_{\text{start}}$, and reverse denoising is executed only over the remaining trajectory. The propagation prior provides the dominant pathloss layout as structured initialization, while the diffusion model refines the multipath details that the prior omits.}
    \label{fig:framework}
\end{figure*}

Fig.~\ref{fig:framework} summarizes the proposed pipeline. The core idea is to bridge the structured propagation knowledge encoded in a physics-based prior with the expressive refinement capacity of a pretrained diffusion model. Rather than starting each reverse trajectory from uninformative Gaussian noise, we construct a scene-aligned intermediate state from the matched prior and let the diffusion model focus on residual multipath refinement.

\subsection{Mid-Start Diffusion Sampling}

We instantiate the initialization mapping $\mathcal{I}$ in Problem~\ref{prob:midstart} as forward noising. In this paper, the matched prior $x_{\mathrm{p}}$ is obtained from the dominant path model (DPM) \cite{dpm}, which provides a coarse but scene-aligned propagation layout. For a chosen start timestep $t_s \in \{1,\dots,T\}$, the mid-start state is constructed as
\begin{equation}
\mathcal{I}(x_{\mathrm{p}}) = x_{t_s}^{\mathrm{p}}
=
\sqrt{\bar{\alpha}_{t_s}}\,x_{\mathrm{p}}
+
\sqrt{1-\bar{\alpha}_{t_s}}\,\epsilon,
\quad
\epsilon\sim\mathcal{N}(0,I).
\label{eq:midstart_state}
\end{equation}
The reverse process is then executed only from $t_s$:
\begin{equation}
x_{t-1}\sim p_\theta(x_{t-1}\mid x_t,c),
\qquad
t=t_s,t_s-1,\dots,1.
\label{eq:midstart_reverse}
\end{equation}
The denoising network and all learned parameters remain unchanged. The only modification is the starting state: the baseline initializes from isotropic noise at $t=T$, whereas the proposed method initializes from a scene-aligned intermediate state at $t=t_s$ that retains the dominant propagation layout of $x_{\mathrm{p}}$. The diffusion model is thereby used to recover multipath-aware details that the prior omits, rather than to reconstruct the entire radio map from scratch.

We parameterize the starting point by a normalized scalar $P_{\text{start}} \in (0,1]$ and map it to the discrete index as $t_s = \lfloor P_{\text{start}} T \rfloor$. Setting $P_{\text{start}}=1.0$ recovers the full-step baseline. Since the denoising network is evaluated once per reverse step, the sampling cost scales approximately linearly with $P_{\text{start}}$, reducing network evaluations from $T$ to $\lfloor P_{\text{start}} T \rfloor$. This parameter controls how the reconstruction workload is divided between the physics-based prior and the learned refinement. The experiments in Section~\ref{sec:experiments} make this tradeoff directly observable.

\subsection{Theoretical Analysis}

We provide three formal results. Theorem~\ref{thm:init_gap} characterizes the initialization gap between the mid-start state and the ideal forward-noised ground truth. Theorem~\ref{thm:truncation_benefit} establishes when truncation reduces the total reconstruction error. Theorem~\ref{thm:sensitivity} shows that prior-quality sensitivity increases under more aggressive truncation.

\subsubsection{Initialization Gap}

Let $x_{t_s}^{\mathrm{GT}} = \sqrt{\bar{\alpha}_{t_s}}\,x_0 + \sqrt{1-\bar{\alpha}_{t_s}}\,\epsilon$ denote the ideal intermediate state obtained by forward noising the ground truth $x_0$ to timestep $t_s$. The mid-start state $x_{t_s}^{\mathrm{p}}$ in Eq.~(\ref{eq:midstart_state}) differs from $x_{t_s}^{\mathrm{GT}}$ due to the mismatch between $x_{\mathrm{p}}$ and $x_0$.

\begin{theorem}[Initialization gap bound]
\label{thm:init_gap}
Under the same noise realization $\epsilon$,
\begin{equation}
\left\| x_{t_s}^{\mathrm{p}} - x_{t_s}^{\mathrm{GT}} \right\|_2
=
\sqrt{\bar{\alpha}_{t_s}}
\left\| x_{\mathrm{p}} - x_0 \right\|_2.
\label{eq:init_gap}
\end{equation}
\end{theorem}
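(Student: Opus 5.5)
The plan is a direct computation from the two definitions. We write out $x_{t_s}^{\mathrm{p}}$ from Eq.~(\ref{eq:midstart_state}) and $x_{t_s}^{\mathrm{GT}}$ as defined just before Theorem~\ref{thm:init_gap}, using the \emph{same} noise realization $\epsilon$ in both. The hypothesis ``under the same noise realization'' is exactly what makes the noise terms cancel, so the whole argument consists of subtracting and then taking norms.

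First, we form the difference
\begin{equation*}
x_{t_s}^{\mathrm{p}} - x_{t_s}^{\mathrm{GT}}
=
\sqrt{\bar{\alpha}_{t_s}}\,x_{\mathrm{p}} + \sqrt{1-\bar{\alpha}_{t_s}}\,\epsilon
-\sqrt{\bar{\alpha}_{t_s}}\,x_0 - \sqrt{1-\bar{\alpha}_{t_s}}\,\epsilon
=
\sqrt{\bar{\alpha}_{t_s}}\,(x_{\mathrm{p}} - x_0).
\end{equation*}
The $\sqrt{1-\bar{\alpha}_{t_s}}\,\epsilon$ terms cancel identically, and this holds pathwise for every realization of $\epsilon$.

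Second, we take the Euclidean (Frobenius, for $H\times W$ arrays) norm of both sides. By absolute homogeneity of the norm, $\|\lambda v\|_2=|\lambda|\,\|v\|_2$. Since $\bar{\alpha}_{t_s}=\prod_{i\le t_s}(1-\beta_i)\in(0,1]$ for a valid schedule with $\beta_i\in[0,1)$, the scalar $\sqrt{\bar{\alpha}_{t_s}}$ is real and nonnegative, so $|\sqrt{\bar{\alpha}_{t_s}}|=\sqrt{\bar{\alpha}_{t_s}}$. This gives Eq.~(\ref{eq:init_gap}) with equality.

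There is no real obstacle. The only points needing care are stating explicitly that the coupling uses one shared $\epsilon$, which is where the exact identity comes from, and noting that $\bar{\alpha}_{t_s}\ge 0$ so that the square root is well defined and nonnegative. Although the theorem is called a ``bound'', it is in fact an equality. We may also remark that, since $\bar{\alpha}_{t_s}$ is nonincreasing in $t_s$, the gap shrinks monotonically as the start timestep grows. This sets up the tradeoff analyzed in the subsequent theorems.
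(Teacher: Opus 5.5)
Your proposal is correct and follows the paper's own proof: expand both states with the shared $\epsilon$, cancel the noise term to get $x_{t_s}^{\mathrm{p}} - x_{t_s}^{\mathrm{GT}} = \sqrt{\bar{\alpha}_{t_s}}\,(x_{\mathrm{p}} - x_0)$, and take norms. Your extra remarks are accurate details the paper leaves implicit: $\sqrt{\bar{\alpha}_{t_s}} \ge 0$, the result is an equality rather than a bound, and the gap is nonincreasing in $t_s$.
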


\begin{proof}
Expanding both states and canceling the shared noise term yields
\begin{equation}
x_{t_s}^{\mathrm{p}} - x_{t_s}^{\mathrm{GT}}
= \sqrt{\bar{\alpha}_{t_s}}\left(x_{\mathrm{p}} - x_0\right).
\end{equation}
Taking the $\ell_2$ norm gives the result.
\end{proof}

Theorem~\ref{thm:init_gap} shows that the initialization gap is proportional to the prior mismatch $\|x_{\mathrm{p}} - x_0\|_2$, scaled by $\sqrt{\bar{\alpha}_{t_s}}$. Since $\sqrt{\bar{\alpha}_{t_s}}$ increases as $t_s$ decreases, smaller $P_{\text{start}}$ preserves more of the prior mismatch while larger $P_{\text{start}}$ attenuates it through heavier noising.

\subsubsection{Condition for Truncation Benefit}

We assume the reverse chain is a contraction mapping where each step reduces the distance to the ideal trajectory by a factor $\gamma_t \in (0,1)$, and let $\delta_t$ denote the per-step denoiser approximation error at timestep $t$. Let $\hat{x}_0^{\mathrm{full}}$ and $\hat{x}_0^{\mathrm{mid}}$ denote the reconstructions from the full-chain and mid-start sampling, respectively.

\begin{theorem}[Truncation benefit condition]
\label{thm:truncation_benefit}
Mid-start sampling yields lower reconstruction error than the full-chain baseline when
\begin{equation}
\sqrt{\bar{\alpha}_{t_s}} \left\| x_{\mathrm{p}} - x_0 \right\|_2
\cdot \prod_{t=1}^{t_s} \gamma_t
<
\sum_{t=t_s+1}^{T} \delta_t \cdot \prod_{j=1}^{t-1}\gamma_j.
\label{eq:benefit_condition}
\end{equation}
\end{theorem}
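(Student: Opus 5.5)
The plan is to model both sampling procedures as perturbed contractions around the ideal reverse trajectory $\{x_t^\star\}$, i.e., the trajectory obtained by running the exact (error-free) reverse chain from the forward-noised ground truth. We make the contraction assumption precise as follows. Let $\Phi_t$ be the exact reverse map at step $t$ and $\hat{\Phi}_t$ the learned one. Assume
\[
\|\Phi_t(u)-\Phi_t(v)\|_2\le\gamma_t\|u-v\|_2
\qquad\text{and}\qquad
\|\hat{\Phi}_t(u)-\Phi_t(u)\|_2\le\delta_t .
\]
Define the error at timestep $t$ as $e_t=\|\hat{x}_t-x_t^\star\|_2$. The triangle inequality then gives the one-step recursion $e_{t-1}\le\gamma_t e_t+\delta_t$.

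Unrolling this recursion from a start index $s$ down to $0$ yields
\[
e_0\le\Big(\prod_{t=1}^{s}\gamma_t\Big)e_s+\sum_{t=1}^{s}\delta_t\prod_{j=1}^{t-1}\gamma_j .
\]
Both estimators are controlled by this single bound; only the start index and the initial error differ.

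\textbf{Mid-start chain.} Take $s=t_s$ and identify $x_{t_s}^\star$ with $x_{t_s}^{\mathrm{GT}}$ under the shared noise realization. Theorem~\ref{thm:init_gap} then gives the initial error exactly:
\[
e_{t_s}=\sqrt{\bar{\alpha}_{t_s}}\,\|x_{\mathrm{p}}-x_0\|_2 .
\]

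\textbf{Full chain.} Take $s=T$. Its initial error $e_T$ is a mismatch between pure noise and $x_T^{\mathrm{GT}}$, which is of size $\sqrt{\bar\alpha_T}\|x_0\|$ and essentially zero for standard schedules. The comparison will therefore treat it as negligible, or absorb it into $\delta_T$.

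\textbf{Comparison.} The two bounds share the accumulation terms for $t=1,\dots,t_s$. After cancelling them, the mid-start bound is strictly smaller exactly when
\[
\sqrt{\bar{\alpha}_{t_s}}\,\|x_{\mathrm{p}}-x_0\|_2\prod_{t=1}^{t_s}\gamma_t<\sum_{t=t_s+1}^{T}\delta_t\prod_{j=1}^{t-1}\gamma_j ,
\]
which is condition~(\ref{eq:benefit_condition}). The interpretation is that the contracted initialization gap must be smaller than the denoiser error the full chain accumulates over the steps mid-start skips.

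\textbf{Main obstacle.} Bounds on both sides are being compared, not actual errors. As it stands, the argument shows only that the mid-start \emph{upper bound} on reconstruction error is below the full-chain bound. There are two ways to handle this. The first is to state the theorem in terms of these error bounds, which is the natural reading given the contraction model. The second is to add a matching lower bound for the full chain: assume the skipped-step errors accumulate without cancellation, for example because the per-step errors are aligned, so that the full-chain error actually attains $\sum_{t>t_s}\delta_t\prod_{j<t}\gamma_j$ plus the shared part. I would take the first route and state the bound-level interpretation explicitly.

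A secondary subtlety is the stochastic terms $\sigma_t z$. I would handle them by coupling both chains on identical noise draws for $t\le t_s$, so that they cancel in the difference, or by folding their effect into $\delta_t$ and working in expectation.
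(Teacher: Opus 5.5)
Your proposal follows the paper's proof: you unroll the perturbed-contraction recursion, take the mid-start initial error from Theorem~\ref{thm:init_gap}, drop the full-chain initial error because $\bar{\alpha}_T \approx 0$, and cancel the shared accumulation terms for $t \le t_s$ to obtain Eq.~(\ref{eq:benefit_condition}). Your caveat that only upper bounds are being compared is correct and applies equally to the paper's own ``subtracting'' step, so the bound-level reading you propose is exactly what this argument establishes.
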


\begin{proof}
Under the contraction assumption, the mid-start error bound is
\begin{equation}
\|\hat{x}_0^{\mathrm{mid}} - x_0\|_2
\leq
\sqrt{\bar{\alpha}_{t_s}} \|x_{\mathrm{p}} - x_0\|_2 \prod_{t=1}^{t_s}\gamma_t
+
\sum_{t=1}^{t_s} \delta_t \prod_{j=1}^{t-1}\gamma_j.
\label{eq:mid_error}
\end{equation}
Since $\bar{\alpha}_T \approx 0$, the full-chain starts from uninformative noise, giving
\begin{equation}
\|\hat{x}_0^{\mathrm{full}} - x_0\|_2
\leq
\sum_{t=1}^{T} \delta_t \prod_{j=1}^{t-1}\gamma_j.
\label{eq:full_error}
\end{equation}
Subtracting Eq.~(\ref{eq:mid_error}) from Eq.~(\ref{eq:full_error}), mid-start is beneficial when the accumulated error of the removed steps exceeds the contracted initialization gap, yielding Eq.~(\ref{eq:benefit_condition}).
\end{proof}

The left-hand side of Eq.~(\ref{eq:benefit_condition}) represents the contracted initialization gap from the physics prior. The right-hand side is the accumulated denoiser error from steps $t_s{+}1$ to $T$ that truncation eliminates. This condition is more easily satisfied when the prior is close to the ground truth and when per-step denoiser error is non-negligible in the early, heavily noised steps. It explains the empirical observation that $P_{\text{start}}=0.5$ improves fidelity over the full-step baseline.

\subsubsection{Prior-Quality Sensitivity Under Truncation}

Consider two priors $x_{\mathrm{p}}^{(1)}$ and $x_{\mathrm{p}}^{(2)}$ with mismatch levels $d_1 = \|x_{\mathrm{p}}^{(1)} - x_0\|_2 < d_2 = \|x_{\mathrm{p}}^{(2)} - x_0\|_2$.

\begin{theorem}[Sensitivity amplification]
\label{thm:sensitivity}
Under the same contraction assumption, the reconstruction error difference between the two priors satisfies
\begin{equation}
\Delta E(t_s)
\leq
\sqrt{\bar{\alpha}_{t_s}}
\left(d_2 - d_1\right)
\prod_{t=1}^{t_s}\gamma_t.
\label{eq:sensitivity}
\end{equation}
Moreover, $\Delta E(t_s)$ increases monotonically as $t_s$ decreases.
\end{theorem}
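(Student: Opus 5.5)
The plan is to reuse the contraction-based error decomposition from the proof of Theorem~\ref{thm:truncation_benefit}, applied twice with the same $t_s$, the same noise realization $\epsilon$, and the same reverse-chain randomness. I would interpret $\Delta E(t_s)$ as the difference between the two reconstruction-error bounds, i.e.\ the bound of Eq.~(\ref{eq:mid_error}) evaluated with $x_{\mathrm{p}}^{(2)}$ minus the same bound evaluated with $x_{\mathrm{p}}^{(1)}$. The key observation is that the denoiser-error term $\sum_{t=1}^{t_s}\delta_t\prod_{j=1}^{t-1}\gamma_j$ depends only on $t_s$ and the backbone, not on the prior. It therefore cancels in the difference.

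First, I would invoke Theorem~\ref{thm:init_gap} for each prior. This gives initialization gaps $\sqrt{\bar{\alpha}_{t_s}}\,d_1$ and $\sqrt{\bar{\alpha}_{t_s}}\,d_2$ relative to the common ideal state $x_{t_s}^{\mathrm{GT}}$. Pushing each gap through the $t_s$ contraction steps multiplies it by $\prod_{t=1}^{t_s}\gamma_t$. Subtracting the two bounds, with the shared $\delta$-sum cancelling, gives exactly Eq.~(\ref{eq:sensitivity}).

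If one prefers to bound the actual distance between the two outputs rather than the difference of bounds, I would argue more directly. The two mid-start states differ by $\sqrt{\bar{\alpha}_{t_s}}(x_{\mathrm{p}}^{(2)}-x_{\mathrm{p}}^{(1)})$, and the reverse steps are contractive under shared randomness. By the triangle inequality $\|x_{\mathrm{p}}^{(2)}-x_{\mathrm{p}}^{(1)}\|_2$ lies between $d_2-d_1$ and $d_1+d_2$. So the bound-difference interpretation is the one that yields the stated $d_2-d_1$ constant, and I would adopt it explicitly.

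For monotonicity, I would define $f(t_s)=\sqrt{\bar{\alpha}_{t_s}}\prod_{t=1}^{t_s}\gamma_t$ and compare consecutive values:
\begin{equation}
\frac{f(t_s-1)}{f(t_s)}=\frac{1}{\sqrt{\alpha_{t_s}}\,\gamma_{t_s}}>1,
\end{equation}
since $\alpha_{t_s}\in(0,1)$ and $\gamma_{t_s}\in(0,1)$. Hence $f$ strictly increases as $t_s$ decreases, and because $d_2-d_1>0$, so does $\Delta E(t_s)$. Both factors push the same way: less noising retains more of the prior mismatch, and fewer steps contract it less.

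The main obstacle is giving $\Delta E(t_s)$ a rigorous meaning. The contraction model only provides upper bounds, so subtracting two upper bounds does not bound the true error difference. I would handle this by stating the result at the level of the error bounds. Alternatively, one could add the modeling assumption, implicit in the paper, that the contraction bound is tight up to the shared $\delta$-term, i.e.\ $E_i = f(t_s)\,d_i + D(t_s)$ with a common $D(t_s)$. Under that assumption both the inequality and the monotonicity follow immediately from the computation above.
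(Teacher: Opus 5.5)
Your proposal is correct and follows the paper's own proof. Like the paper, you cancel the prior-independent denoiser-error sum in Eq.~(\ref{eq:mid_error}) between the two priors, which leaves $\sqrt{\bar{\alpha}_{t_s}}(d_2-d_1)\prod_{t=1}^{t_s}\gamma_t$, and you note that both factors grow as $t_s$ decreases. Your explicit ratio $1/(\sqrt{\alpha_{t_s}}\,\gamma_{t_s})>1$ and your remark that $\Delta E(t_s)$ must be read as a difference of bounds (or under a tightness assumption) spell out what the paper leaves implicit.
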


\begin{proof}
From Eq.~(\ref{eq:mid_error}), the two priors share the same accumulated denoiser error term. Their error bound difference is
\begin{equation}
\Delta E(t_s)
\leq
\sqrt{\bar{\alpha}_{t_s}} (d_2 - d_1) \prod_{t=1}^{t_s}\gamma_t.
\end{equation}
Decreasing $t_s$ increases $\sqrt{\bar{\alpha}_{t_s}}$ and increases $\prod_{t=1}^{t_s}\gamma_t$ because fewer contraction steps are applied. Both factors enlarge $\Delta E(t_s)$.
\end{proof}

Theorem~\ref{thm:sensitivity} formalizes why prior quality becomes more critical under aggressive truncation. At moderate $P_{\text{start}}$, the reverse chain has enough contraction steps to absorb the initialization difference between a high-fidelity and a low-fidelity prior. At small $P_{\text{start}}$, fewer contraction steps are available and the scaling factor $\sqrt{\bar{\alpha}_{t_s}}$ is larger, amplifying the performance gap. This result provides theoretical support for the prior-quality ablation in the experiments.

\begin{figure*}[!t]
\centering
\captionsetup{font={small}, skip=8pt}
\renewcommand{\arraystretch}{0.8}
\setlength{\tabcolsep}{1.2pt}

\begin{tabular}{c c c c c c}
  \includegraphics[width=0.138\linewidth]{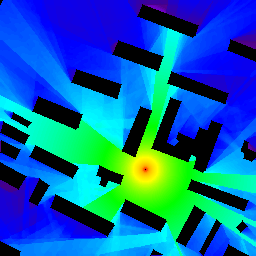} &
  \includegraphics[width=0.138\linewidth]{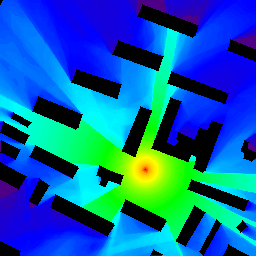} &
  \includegraphics[width=0.138\linewidth]{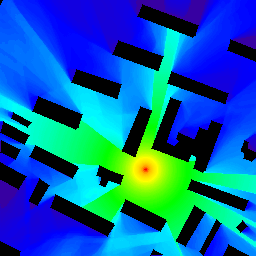} &
  \includegraphics[width=0.138\linewidth]{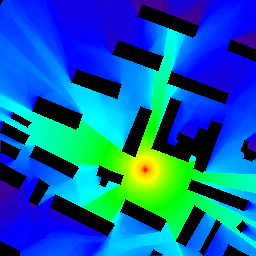} &
  \includegraphics[width=0.138\linewidth]{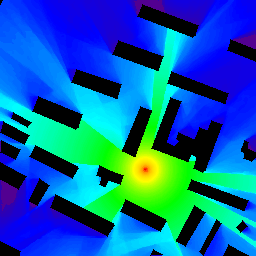} &
  \includegraphics[width=0.138\linewidth]{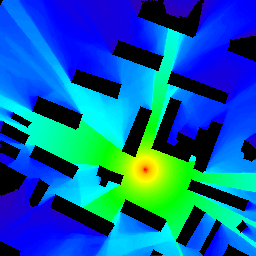} \\

  \includegraphics[width=0.138\linewidth]{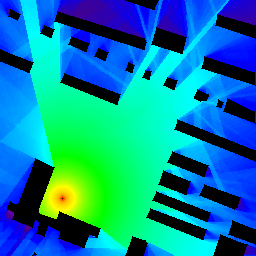} &
  \includegraphics[width=0.138\linewidth]{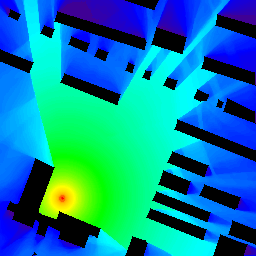} &
  \includegraphics[width=0.138\linewidth]{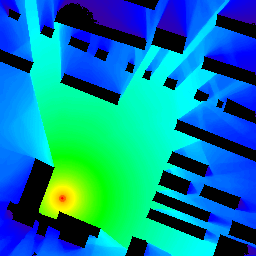} &
  \includegraphics[width=0.138\linewidth]{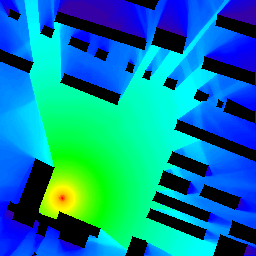} &
  \includegraphics[width=0.138\linewidth]{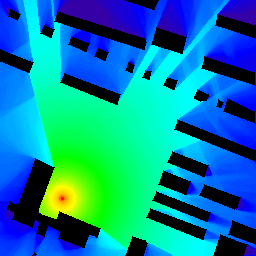} &
  \includegraphics[width=0.138\linewidth]{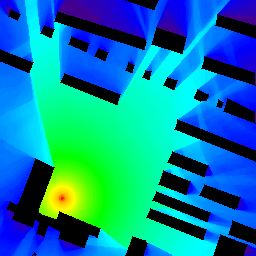} \\

  \includegraphics[width=0.138\linewidth]{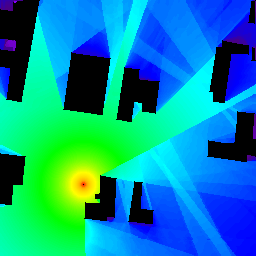} &
  \includegraphics[width=0.138\linewidth]{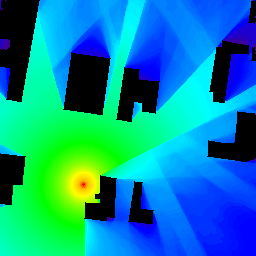} &
  \includegraphics[width=0.138\linewidth]{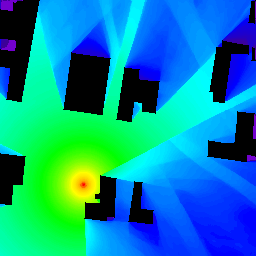} &
  \includegraphics[width=0.138\linewidth]{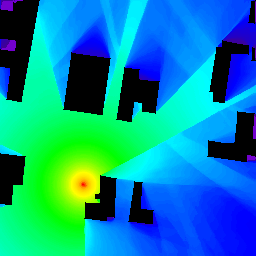} &
  \includegraphics[width=0.138\linewidth]{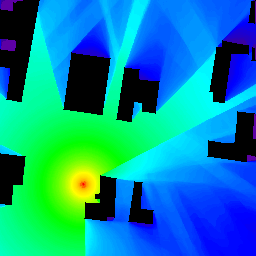} &
  \includegraphics[width=0.138\linewidth]{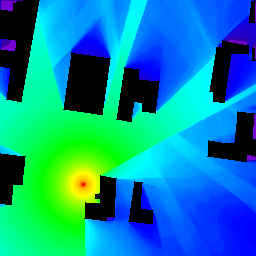} \\

  \includegraphics[width=0.138\linewidth]{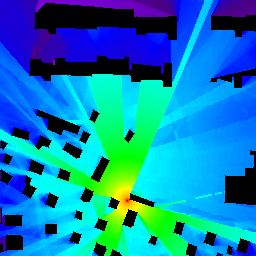} &
  \includegraphics[width=0.138\linewidth]{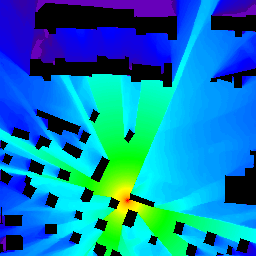} &
  \includegraphics[width=0.138\linewidth]{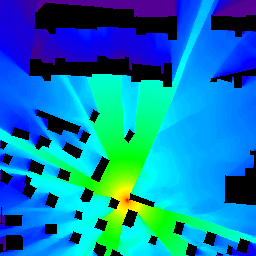} &
  \includegraphics[width=0.138\linewidth]{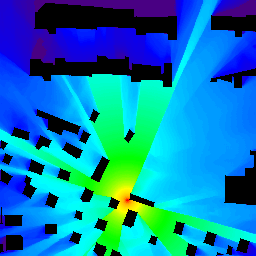} &
  \includegraphics[width=0.138\linewidth]{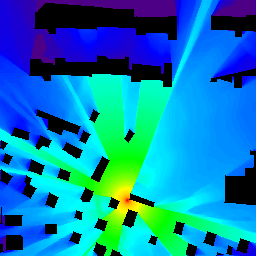} &
  \includegraphics[width=0.138\linewidth]{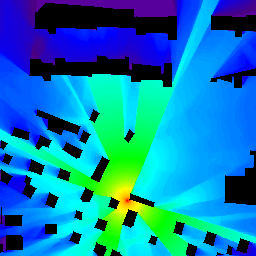} \\

  \small\bfseries Ground Truth &
  \small\bfseries $P_{\text{start}}=0.50$ &
  \small\bfseries $P_{\text{start}}=0.40$ &
  \small\bfseries $P_{\text{start}}=0.30$ &
  \small\bfseries $P_{\text{start}}=0.25$ &
  \small\bfseries Baseline
\end{tabular}

\caption{Visual comparison under different $P_{\text{start}}$ settings. Moderate starts preserve global propagation layout and local attenuation details more reliably than aggressive truncation.}
\label{fig:pstart_visual_grid}
\end{figure*}
\section{Experimental Results}
\label{sec:experiments}

\subsection{Experimental Settings}
We evaluate the proposed method on IRT4HighRes, a high-resolution radio map dataset generated by advanced ray-tracing simulation that captures complex multipath propagation in realistic urban environments. The dataset contains 99 scenes with 80 transmitter locations each, yielding 7920 test samples. The pretrained RMDM backbone \cite{jia2025rmdm} is used with $T=100$ timesteps under a fixed DDPM sampling schedule. The baseline corresponds to $P_{\text{start}}=1.0$, which exactly recovers the full-step RMDM sampling from pure Gaussian noise. RME-GAN \cite{zhang2023rme} and RadioUNet \cite{levie2021radiounet} are included as representative existing methods. All runtimes are measured on a single NVIDIA RTX 4090 GPU. We report NMSE, RMSE, SSIM, and PSNR to assess reconstruction quality. A sparse IRT4 subset is used for $P_{\text{start}}$ sensitivity analysis. A 720-sample overlap subset formed by nine shared scene IDs between IRT4HighRes and the auxiliary ke \cite{giovaneli1984analysis}/cost \cite{correia2009view} prior datasets is used for prior-quality ablation, where the matched DPM prior $x_{\mathrm{p}}$ is replaced by the lower-quality ke and cost priors while the diffusion backbone and reverse-step budget remain fixed.

\begin{table*}[!t]
\centering
\caption{Quantitative results on IRT4HighRes. \best{Bold Red} and \second{Underlined Blue} indicate the best and second-best results. $P_{\text{start}}=1.0$ exactly recovers the full-step RMDM baseline. Speedup and PSNR gain are reported relative to RMDM (35.28\,dB, 514\,ms).}
\label{tab:main_results}
\renewcommand{\arraystretch}{1.15}
\resizebox{0.75\linewidth}{!}{
\begin{tabular}{@{}l|c|c|c|c|c|c|c|c@{}}
\toprule[1.5pt]
\textbf{Method} & $P_{\text{start}}$ & \textbf{NMSE} $\downarrow$ & \textbf{RMSE} $\downarrow$ & \textbf{SSIM} $\uparrow$ & \textbf{PSNR} (dB) $\uparrow$ & \textbf{Time} (ms) $\downarrow$ & \textbf{Speedup} (\timesw) & \textbf{Gain} (PSNR) \\
\midrule[1pt]
RME-GAN   & -- & 0.01150 & 0.03030 & 0.93230 & 30.54 & \best{42.0} & -- & -- \\
RadioUNet & -- & 0.05904 & 0.04796 & 0.82094 & 26.73 & \second{60.0} & -- & -- \\
RMDM      & -- & 0.00688 & 0.01808 & 0.93910 & 35.28 & 514.0 & -- & -- \\
\midrule
              & 1.00 & 0.00688 & 0.01808 & 0.93910 & 35.28 & 514.0 & \timesw 1.00 & -- \\
              & 0.90 & 0.00686 & 0.01806 & 0.93922 & 35.29 & 465.0 & \timesw 1.11 & \up{0.03} \\
              & 0.80 & 0.00686 & 0.01806 & 0.93916 & 35.28 & 418.0 & \timesw 1.23 & -- \\
              & 0.70 & \second{0.00685} & 0.01804 & 0.93921 & 35.30 & 359.0 & \timesw 1.43 & \up{0.06} \\
\textbf{Ours} & 0.60 & \second{0.00685} & \second{0.01802} & \second{0.93928} & 35.33 & 313.0 & \timesw 1.64 & \up{0.14} \\
              & 0.50 & \best{0.00683} & \best{0.01794} & \best{0.93935} & \second{35.39} & 256.0 & \timesw 2.01 & \up{0.31} \\
              & 0.40 & 0.00691 & 0.01810 & 0.93917 & \best{35.41} & 206.0 & \timesw 2.50 & \up{0.37} \\
              & 0.30 & 0.00723 & 0.01838 & 0.93846 & 35.21 & 154.0 & \second{\timesw 3.34} & \dn{0.20} \\
              & 0.25 & 0.00751 & 0.01871 & 0.93789 & 35.05 & 138.0 & \best{\timesw 3.72} & \dn{0.65} \\
\bottomrule[1.5pt]
\end{tabular}}
\end{table*}

\subsection{Main Results on IRT4HighRes}
 
Table~\ref{tab:main_results} reports the main results. At $P_{\text{start}}=0.5$, the proposed method reduces latency from 514\,ms to 256\,ms ($2.01\times$ speedup) while improving NMSE from 0.00688 to 0.00683, RMSE from 0.01808 to 0.01794, SSIM from 0.93910 to 0.93935, and PSNR from 35.28\,dB to 35.39\,dB. This simultaneous improvement in speed and fidelity is consistent with Theorem~\ref{thm:truncation_benefit}: the DPM prior provides a sufficiently informative initialization that the contracted initialization gap is smaller than the accumulated denoiser error from the removed early steps. Moderate starts in $P_{\text{start}}\in[0.4,0.6]$ preserve the best speed--quality balance, with $P_{\text{start}}=0.5$ offering the best aggregate fidelity. Beyond this range, $P_{\text{start}}=0.3$ and $0.25$ further reduce runtime but error metrics begin to rise, as the remaining reverse chain becomes too short to contract the initialization gap.

\subsection{Operating-Range Sensitivity}

Table~\ref{tab:sparse_sensitivity} reports the sensitivity to $P_{\text{start}}$ on the sparse IRT4 subset. These values are reported on a different subset and are not directly comparable to the full-domain results. The same pattern persists: $P_{\text{start}}=0.5$ achieves the lowest NMSE and RMSE, while $P_{\text{start}}=0.4$ offers the highest SSIM at the cost of slightly worse error metrics. More aggressive starts at $P_{\text{start}}=0.3$ and $0.25$ show consistent degradation across all metrics. This subset experiment confirms the design choice from the main evaluation: moderate truncation provides the most favorable tradeoff, while overly aggressive truncation leaves insufficient denoising budget for detail recovery.

\begin{table}[!t]
\centering
\small
\caption{$P_{\text{start}}$ sensitivity on the sparse IRT4 subset. \best{Bold Red} and \second{Underlined Blue} indicate the best and second-best results.}
\label{tab:sparse_sensitivity}
\renewcommand{\arraystretch}{1.15}
\resizebox{0.75\linewidth}{!}{
\begin{tabular}{@{}l|c|c|c|c|c@{}}
\toprule[1.0pt]
$P_{\text{start}}$ & \textbf{NMSE} $\downarrow$ & \textbf{RMSE} $\downarrow$ & \textbf{SSIM} $\uparrow$ & \textbf{PSNR} (dB) $\uparrow$ & \textbf{Time} (ms) $\downarrow$ \\
\midrule
0.60 & \second{0.00903} & \second{0.02058} & 0.93110 & \second{33.79} & 312 \\
0.50 & \best{0.00895}   & \best{0.02053}   & \second{0.93218} & \best{33.82}   & 259 \\
0.40 & 0.00928          & 0.02096          & \best{0.94210}   & 33.77          & 207 \\
0.30 & 0.00952          & 0.02114          & 0.92439          & 33.44          & \second{158} \\
0.25 & 0.00959          & 0.02134          & 0.92224          & 33.28          & \best{139} \\
\bottomrule[1.0pt]
\end{tabular}}
\end{table}

\subsection{Prior-Quality Ablation}
 
Table~\ref{tab:prior_ablation} isolates the role of prior quality on the 720-sample overlap subset. The matched DPM prior achieves the best results at both $P_{\text{start}}=0.5$ and $P_{\text{start}}=0.3$, the cost prior \cite{correia2009view} causes a moderate degradation, and the ke prior \cite{giovaneli1984analysis} causes a much larger drop. At $P_{\text{start}}=0.5$, replacing DPM with cost degrades PSNR from 36.42\,dB to 35.09\,dB, while ke drops it to 32.99\,dB. At $P_{\text{start}}=0.3$, the gap widens sharply: DPM still reaches 36.16\,dB, but cost falls to 30.35\,dB and ke falls to 25.54\,dB. The runtime remains nearly unchanged across prior types at a fixed $P_{\text{start}}$, confirming that the performance gap is driven by initialization quality rather than by computational budget.
 
This amplification effect is precisely what Theorem~\ref{thm:sensitivity} predicts. At smaller $P_{\text{start}}$, the scaling factor $\sqrt{\bar{\alpha}_{t_s}}$ is larger and fewer contraction steps are available, so the reconstruction error difference $\Delta E(t_s)$ between a high-fidelity and a low-fidelity prior is magnified. The ranking DPM $>$ cost $>$ ke is consistent across both tested $P_{\text{start}}$ values, and the widening gap under more aggressive truncation confirms the monotonicity result in Theorem~\ref{thm:sensitivity}. An additional observation is that the reconstruction quality ranking among the three priors reflects their respective levels of scene-level modeling fidelity, suggesting that mid-start diffusion reconstruction can serve as a proxy metric for evaluating how well different propagation models capture the dominant spatial structure of a given scene.

\begin{table}[!t]
\centering
\small
\caption{Prior-quality ablation on the 720-sample overlap subset (scene IDs: 615, 647, 654, 662, 664, 681, 684, 695, 698). \best{Bold Red} and \second{Underlined Blue} indicate the best and second-best results.}
\label{tab:prior_ablation}
\renewcommand{\arraystretch}{1.15}
\resizebox{0.75\linewidth}{!}{
\begin{tabular}{@{}l|l|c|c|c|c|c@{}}
\toprule[1.2pt]
\textbf{Prior} & $P_{\text{start}}$ & \textbf{NMSE} $\downarrow$ & \textbf{RMSE} $\downarrow$ & \textbf{SSIM} $\uparrow$ & \textbf{PSNR} (dB) $\uparrow$ & \textbf{Time} (ms) $\downarrow$ \\
\midrule
\multirow{2}{*}{DPM}
  & 0.50 & \best{0.00494} & \best{0.01595} & \best{0.95173} & \best{36.42} & 257.0 \\
  & 0.30 & \second{0.00544} & \second{0.01654} & \second{0.95043} & \second{36.16} & \second{154.2} \\
\midrule
\multirow{2}{*}{ke}
  & 0.50 & 0.00846 & 0.02321 & 0.94475 & 32.99 & 256.9 \\
  & 0.30 & 0.04602 & 0.05636 & 0.89074 & 25.54 & \best{154.0} \\
\midrule
\multirow{2}{*}{cost}
  & 0.50 & 0.00585 & 0.01816 & 0.94974 & 35.09 & 259.6 \\
  & 0.30 & 0.01754 & 0.03304 & 0.93447 & 30.35 & 155.8 \\
\bottomrule[1.2pt]
\end{tabular}}
\end{table}

\subsection{Visual Analysis}

Fig.~\ref{fig:dpm_prior_visual} illustrates the role of the matched DPM prior. The prior captures the coarse propagation layout, including dominant blocked regions and large-scale pathloss distribution, but misses the fine intensity details needed for an accurate radio map. Starting reverse denoising from this structured state allows the diffusion model to concentrate its updates on multipath refinement rather than on reconstructing the entire map from noise. This visual evidence is consistent with the quantitative gains in Table~\ref{tab:main_results} and the prior-quality sensitivity in Table~\ref{tab:prior_ablation}.

Fig.~\ref{fig:pstart_visual_grid} visualizes the effect of varying $P_{\text{start}}$. At $P_{\text{start}}=0.5$ and $0.4$, the reconstructed maps remain close to the ground truth in both global propagation contours and local attenuation transitions. At $P_{\text{start}}=0.3$ and $0.25$, visible deviations become more frequent in fine-grained intensity patterns. The visual trend matches the numerical behavior in Table~\ref{tab:sparse_sensitivity}: a moderately noised prior provides sufficient structure to accelerate sampling, whereas overly aggressive truncation leaves too little denoising budget for detail recovery.

\begin{figure}[!t]
\centering
\captionsetup{font={small}, skip=8pt}
\renewcommand{\arraystretch}{0.8}
\setlength{\tabcolsep}{2pt}

\begin{tabular}{c c c}
  \includegraphics[width=0.25\linewidth]{figs/irt4/gt/650_1.png} &
  \includegraphics[width=0.25\linewidth]{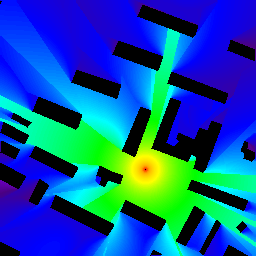} &
  \includegraphics[width=0.25\linewidth]{figs/irt4/p5/650_1.png} \\

  \includegraphics[width=0.25\linewidth]{figs/irt4/gt/696_0.png} &
  \includegraphics[width=0.25\linewidth]{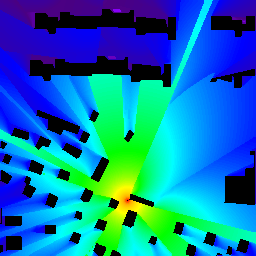} &
  \includegraphics[width=0.25\linewidth]{figs/irt4/p5/696_0.png} \\

  \small\bfseries Ground Truth &
  \small\bfseries Matched DPM Prior &
  \small\bfseries Reconstruction
\end{tabular}

\caption{Visual comparison on IRT4HighRes. The DPM prior captures the coarse layout, while mid-start denoising at $P_{\text{start}}=0.5$ recovers high-fidelity details.}
\label{fig:dpm_prior_visual}
\end{figure}

\section{Conclusion}
In this paper, we have proposed a mid-start diffusion sampling strategy that bridges physics-based propagation priors and learned diffusion refinement for radio map construction, and have provided theoretical analysis showing that the reconstruction benefit depends on both prior quality and the truncation depth. We have validated this strategy on a high-resolution ray-tracing dataset with 7920 samples, demonstrating a $2.01\times$ inference speedup at $P_{\text{start}}=0.5$ with simultaneous fidelity improvement, and confirmed through a prior-quality ablation that reconstruction quality consistently tracks the scene-level accuracy of the initialization source. The proposed method is particularly suited to dynamic wireless networks where radio maps must be refreshed repeatedly across transmitter locations or time-varying scenarios, as it substantially reduces per-update latency without requiring model retraining. Future work will explore adaptive selection of $P_{\text{start}}$ based on scene complexity and extension to other generative backbones beyond DDPM.

\bibliographystyle{IEEEtran}
\bibliography{ref}

\end{document}